\documentclass[letter, 10pt, conference]{ieeeconf}      

\IEEEoverridecommandlockouts                              
\overrideIEEEmargins                                      

\pdfminorversion=4
\usepackage{amsthm, amssymb, amsmath}
\usepackage{mathrsfs}
\newtheorem{prob}{Problem}
\newtheorem{prop}{Proposition}
\newtheorem{thm}{Theorem}

\newtheorem{assum}{Assumption}

\usepackage{pgfplots} 
\usepackage{subfig}
\usetikzlibrary{arrows.meta}
\tikzset{>=latex}
\usepackage{colortbl}
\newcommand{\comment}[1]{}

\usepackage{amsfonts}
\usepackage{amssymb}
\usepackage{amsmath}
\usepackage{caption}
\usepackage{cite}
\usepackage{color}
\usepackage{epstopdf}
\usepackage{float}
\usepackage{graphicx}
\usepackage{ltl}
\usepackage{pxfonts}
\usepackage{tikz}
\usepackage{xspace}
\usepackage{multirow}
\usepackage{hyperref}

\newcommand{\game}{\mathcal{G}}
\newcommand{\gstates}{G}
\newcommand{\ginit}{g_0}

\newcommand{\spec}{\varphi}

\newcommand{\plays}{\mathit{Plays}}
\newcommand{\prefs}{\mathit{Prefs}}

\newcommand{\inp}{\textit{E}}
\newcommand{\out}{\textit{A}}

\newcommand{\alphabet}{\Sigma}
\newcommand{\ialphabet}{\Sigma_\inp}
\newcommand{\oalphabet}{\Sigma_\out}

\newcommand{\symb}{\sigma}

\newtheorem{definition}{Definition}

\usepackage{booktabs}
\usepackage{ltl}

\title{Near-Optimal Reactive Synthesis Incorporating Runtime
    Information\thanks{This material is based upon work
        supported by the Office of Naval Research
        (N00014-18-1-2829), National Aeronautics and Space
Administration (80NSSC19K0209), and U.S. Army Research
Laboratory (ACC-APG-RTP W911NF).}}
\author{Suda Bharadwaj\textsuperscript{1}, Abraham P. Vinod\textsuperscript{1}, Rayna Dimitrova\textsuperscript{2}, Ufuk Topcu\textsuperscript{1}\vspace{0.3cm} \\
  \normalsize\textsuperscript{1}The University of Texas at Austin \\
  \textsuperscript{2}The University of Sheffield, UK \\}

\begin{document}
\date{}
\maketitle

\begin{abstract}
We consider the problem of optimal reactive synthesis --- compute a strategy that satisfies a mission specification in a dynamic environment, and optimizes a performance metric. We incorporate task-critical information, that is only available at runtime, into the strategy synthesis in order to improve performance. Existing approaches to utilising such time-varying information require online re-synthesis, which is not computationally feasible in real-time applications. In this paper, we pre-synthesize a set of strategies corresponding to \emph{candidate instantiations} (pre-specified representative information scenarios). We then propose a novel switching mechanism to dynamically switch between the strategies at runtime while guaranteeing all safety and liveness goals are met. We also characterize bounds on the performance suboptimality. We demonstrate our approach on two examples --- robotic motion planning where the likelihood of the position of the robot's goal is updated in real-time, and an air traffic management problem for urban air mobility.
\end{abstract}


\section{Introduction}

As autonomous systems become more widely used in society, we require provable guarantees of performance and safety in complex missions~\cite{belta2007symbolic,finucane2010ltlmop}. In many applications, it is not enough for an autonomous agent to satisfy its mission objective, but it is often required that it also optimizes some performance metric. Due to limits on communication, sensing, or computational power, the autonomous agent may have access to information that may be available only at the time of execution. Traditional approaches either ignore this information or can only make use of it at the cost of heavy computation or high memory requirements~\cite{Ehlerscost,jangcontinuous}. We propose a correct-by-construction switching strategy that utilizes such information at runtime for improved performance while guaranteeing the satisfaction of high-level mission specifications, and also alleviates the shortcomings of the existing methods to enable real-time deployment. 

For example, consider a motion-planning problem for a service robot as shown in Figure~\ref{fig:gazeboworld}. A high-level mission for the robot is to meet the human infinitely often, while ensuring that it always has sufficient battery power (rechargeable by returning to a charging station). Given the probability of the human's location based on past observations (runtime information), the proposed approach finds the human in a shorter period of time (compared to strategies that ignore this probability information), while satisfying the safety specification. 

For another, more complex example, consider the coordination of landing a collection of autonomous air vehicles in urban air mobility (UAM) operations \cite{goyal2018urban,gipson2017nasa}. 
We seek to optimize performance (reduce  maximum delay in aircraft landing) while ensuring safe takeoff and landing operations \cite{thipphavong2018urban}. The on-demand nature of UAM means knowledge of air traffic demands is not available at design time. This necessitates a method that can use traffic information gained at runtime to adjust behavior for improved performance and safety. Previous approaches implement runtime safety enforcement \cite{bhnfm, AlshiekhShield,Konighofer2017,8815233}, but cannot handle more general specifications. 

\begin{figure}
    \input{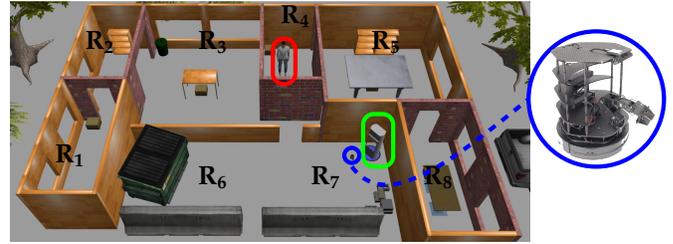}
    \caption{Path planning environment for a turtlebot (in blue) to infinitely often service a human (in red). The robot can recharge at a charging station (in green).}
    \label{fig:gazeboworld}
    \vspace{-0.5cm}
\end{figure}

The two scenarios described previously illustrate a \emph{reactive} planning problem. The autonomous system has to react to an uncontrolled environment, and guarantee correctness with respect to a given mission specification for all possible behaviours of the environment for all time points in the future. The standard approach to solve such a planning problem is to use \emph{reactive synthesis} \cite{piterman2006,bloem2012}. In particular, there is a large body of work focused on synthesis for a fragment of linear temporal logic (LTL) called GR(1)~\cite{Ehlerslugs,bh18,Alonso18,Moarref18}. The solution time is polynomial in the state space of the game structure, and exponential in the number of atomic propositions. Therefore, this approach typically relies on offline planning, that prevents easy incorporation of runtime information. In problems with a continuous state space, a discrete abstraction is used that preserves correctness. Such controllers however, can be significantly suboptimal with respect to the performance objective~\cite{jangcontinuous}. 

We consider the \emph{runtime information} as a (possibly continuous) parameter associated with the environment. The work in~\cite{jangcontinuous} allows for near-optimal behaviour on continuous executions, however the authors focus on a specialized cost metric. Additionally, their method relies on online re-synthesis, which is not feasible for real-time deployment. This work was later extended in~\cite{Ehlerscost} to account for delay costs arising from a potentially adversarial environment. However, it relies on the discretization of the continuous parameter space, which fails to scale with the number of atomic propositions in the synthesis problem. 

Our approach incorporates parametrized runtime information by switching between pre-computed strategies. First, for a given set of \emph{candidate instantiations} of the parameter we synthesize offline optimal strategies that satisfy all task specifications. Next, we obtain bounds on the suboptimality incurred by the use of these policies at \emph{all} other parameter values. This computation does not require discretization of the parameter space. At runtime, we dynamically switch strategies based on the these suboptimality bounds, thereby incorporating runtime information into the offline synthesis of correct-by construction policies. 
To this end, we derive a switching function that guarantees the resulting execution is provably correct, and near-optimal.

The main contributions of this paper are: 1) a novel switching protocol between pre-synthesized correct strategies that improves performance, 2) correctness of the switching protocol with respect to the mission specification, and 3) characterization of the suboptimality bounds on performance. We demonstrate the proposed approach on a motion planning problem for a service robot, and a traffic scheduling problem for UAM. 


\section{Preliminaries}\label{sec_prel}

\begin{figure*}[t!]
\centering
    \subfloat[$\overline{p}_1 = \lbrack1,0,0\rbrack $ \label{fig:gazebopolicies1}]{\includegraphics[width=0.33\linewidth]{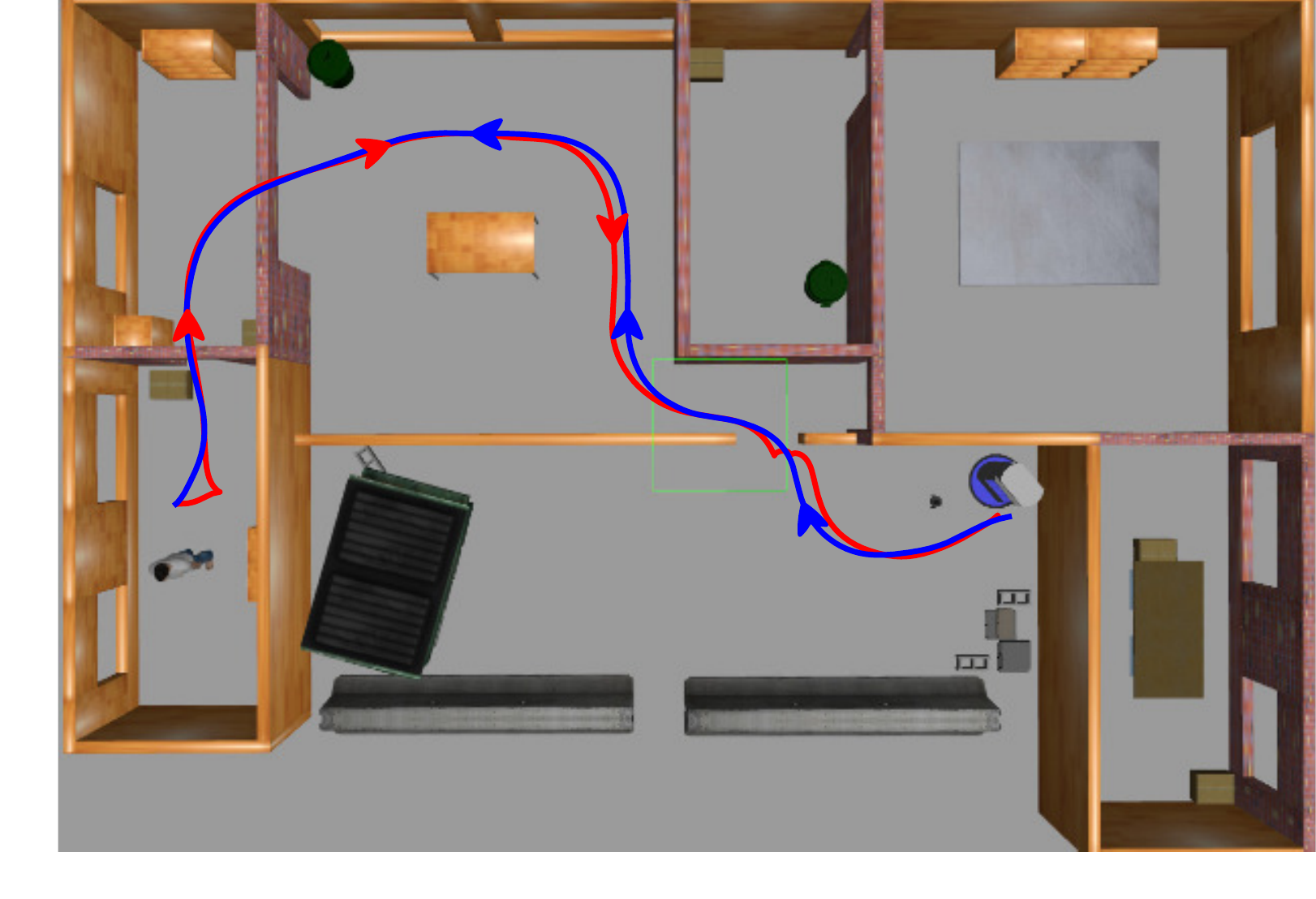}}
    \subfloat[$\overline{p}_2= \lbrack0,1,0\rbrack $]{\includegraphics[width=0.33\linewidth]{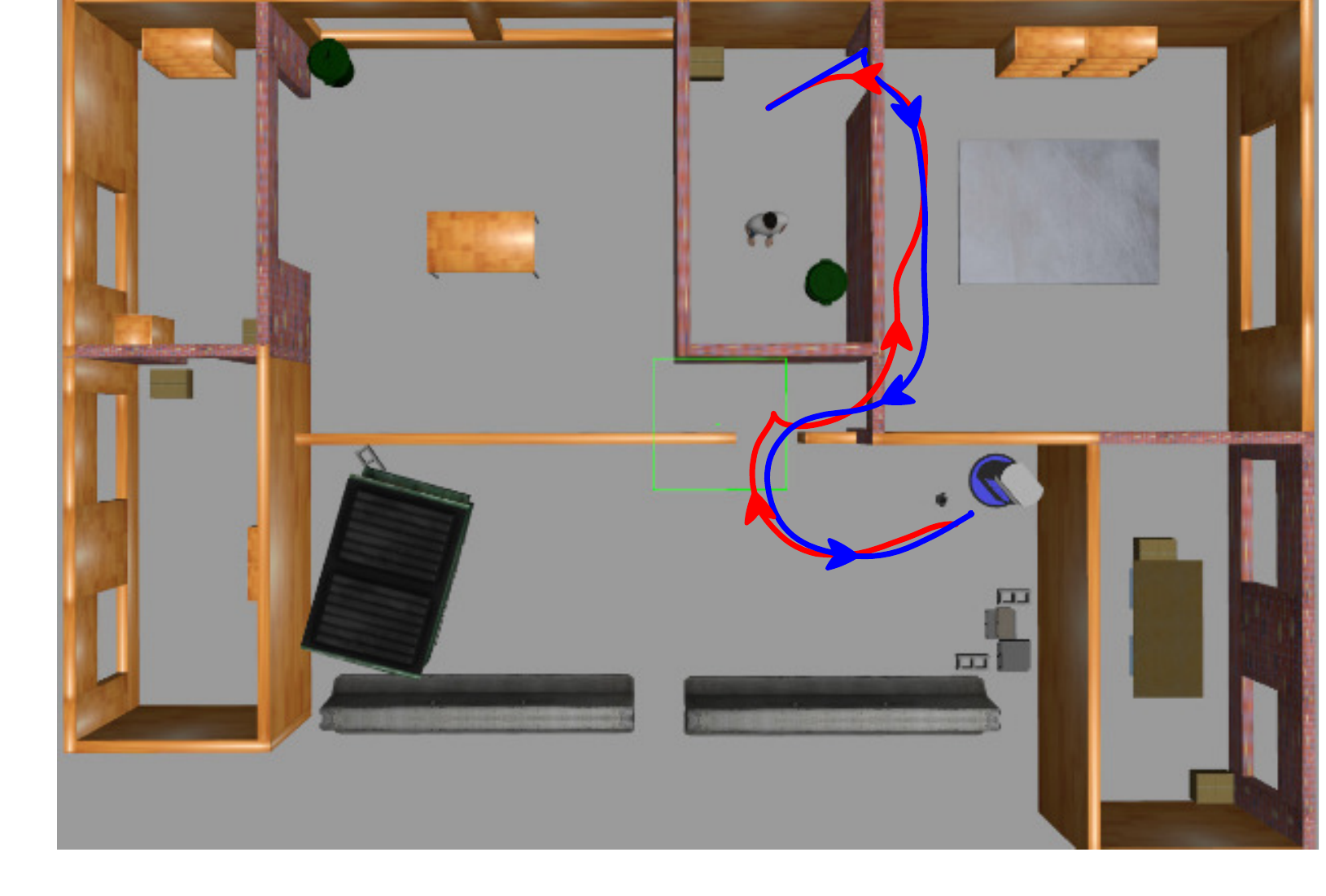}}
    \subfloat[$\overline{p}_3 = \lbrack0,0,1\rbrack $]{\includegraphics[width=0.33\linewidth]{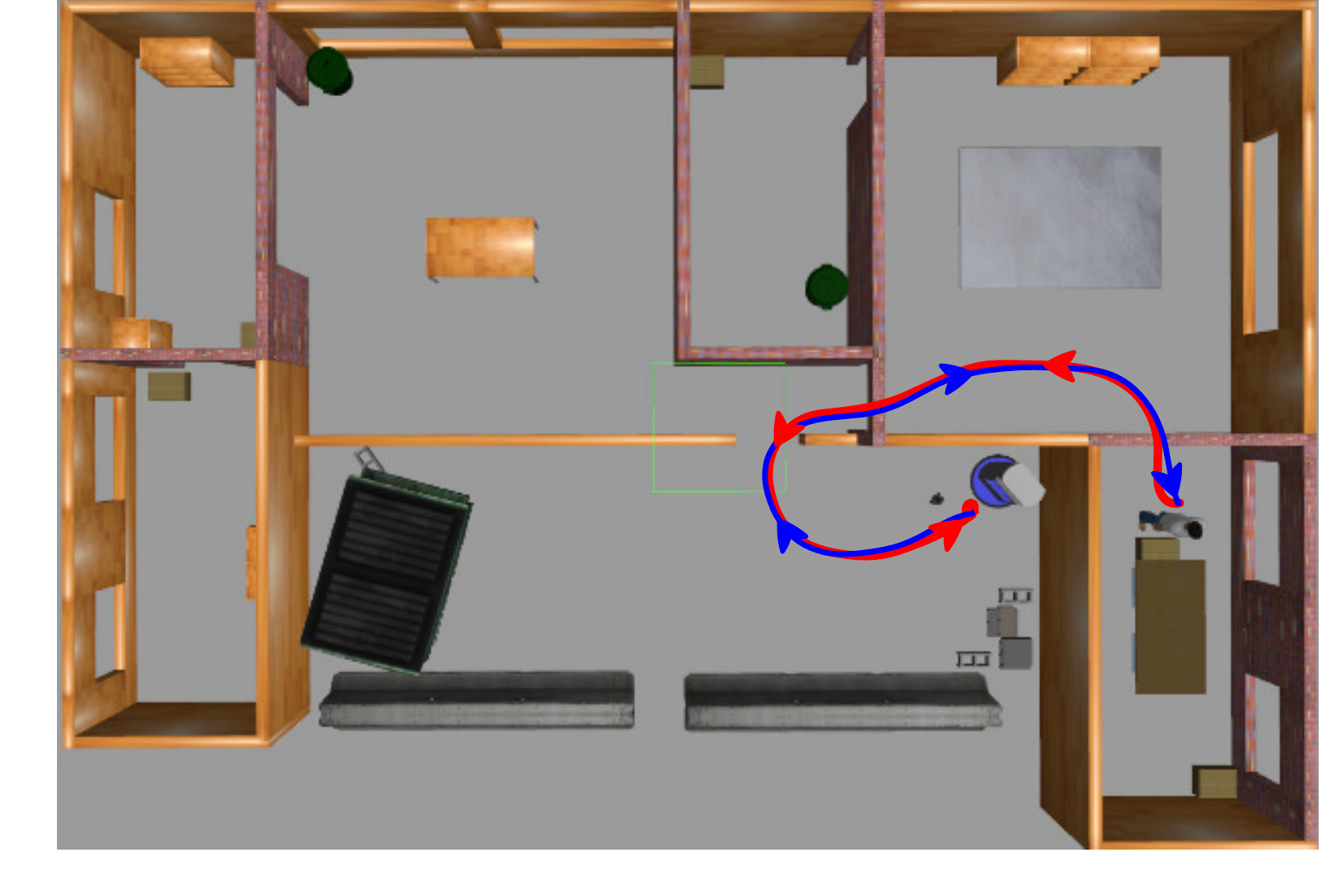}}
    \caption{Continuous trajectories resulting from executing policies corresponding to the solution of~\eqref{prob:opt} for three different instantiations of runtime information vector $\overline{p} \colon \overline{p}_1 = [1,0,0]$, $\overline{p}_2 = [0,1,0]$, and $\overline{p}_3 = [0,0,1]$ .}\label{fig:trivialtrajs}
\end{figure*}

\subsubsection{\textbf{Basic notation}}
We consider reactive systems with a finite set $\inp$ of Boolean \emph{inputs}, controlled by the \emph{E}nvironment, and a finite set $\out$ of Boolean \emph{ outputs}, controlled by the \emph{A}gent. Together, they define the system's input alphabet $\ialphabet=2^\inp$ and the output alphabet $\oalphabet=2^\out$. We define $\alphabet=\ialphabet \times \oalphabet$. 

\subsubsection{\textbf{Game structures}}
We model the interaction between the agent and its environment as a two-player game. Formally, the game is played on a \emph{game structure} which is a tuple 
$\game = (\gstates, \ginit, \alphabet, \delta)$,
where:
\begin{itemize}
\item $\gstates$ is a finite set of states and $\ginit \in \gstates$ the initial state;
\item $\alphabet = \ialphabet \times \oalphabet$ is the alphabet of actions available to the environment and the agent respectively;
\item $\delta: \gstates \times \alphabet \rightarrow \gstates$
is a complete transition function, that maps each state, input (environment action) and output (agent action) to a successor state.
\end{itemize}

\subsubsection{\textbf{Winning conditions}} 
The \emph{winning condition} for the agent in a game $\game$ is given as a set of plays $\varphi \subseteq \plays(\game)$ that specifies the set of plays that result in the agent winning the game. We consider games in which the agent has a \emph{Generalized Reactivity 1} (GR(1)) winning condition, which are common in a variety of practical applications. In the following, we make use of the linear temporal logic (LTL) operators \emph{always} $\LTLglobally$ and \emph{eventually} $\LTLfinally$. For full details on LTL syntax and semantics, we refer the reader to~\cite{MCBook}.

A GR(1) winning condition is defined by sets of states $S_\inp, S_\out \subseteq G$, $E_i \subseteq G$ for $i=1,\ldots,m$ and $F_j \subseteq G$ for $j=1,\ldots,n$, and consists of all plays $\overline \pi$ such that if $\overline{\pi} \in \LTLglobally S_\inp \cap \LTLglobally\,\LTLfinally\, E_{i}$ for all $i=1,\ldots,m$, then $\overline{\pi} \in \LTLglobally S_\out \cap \LTLglobally\,\LTLfinally\, F_{j}$ for all $j=1,\ldots,n$. Intuitively, for a play $\overline \pi$ to be winning, whenever the environment satisfies the assumptions $\LTLglobally\, S_\inp,\LTLglobally\,\LTLfinally\, E_{1},\ldots,\LTLglobally\,\LTLfinally\, E_{m}$, then the agent must satisfy all the guarantees $\LTLglobally\, S_\out,\LTLglobally\,\LTLfinally\, F_{j},\ldots,\LTLglobally\,\LTLfinally\, F_{n}$. By abuse of logical operators, we abbreviate GR(1)  conditions as
$$\varphi =  \left(\LTLglobally\, S_\inp \wedge \bigwedge_{i=1}^{m}  \LTLglobally\,\LTLfinally\, E_{i}\right) \rightarrow
\left(\LTLglobally\, S_\out \wedge \bigwedge_{i=1}^{n} \LTLglobally\,\LTLfinally\,F_{i}\right).$$

\subsubsection{\textbf{Strategies}}
A \emph{strategy for the agent} is a function $\rho_\out:
\prefs(\game) \times \ialphabet \rightarrow
\oalphabet$ which maps a prefix (the history of the play so far) and an action of the environment to an action of the agent. 
A \emph{strategy for the environment} is a function $\rho_\inp: \prefs(\game)\rightarrow \ialphabet$ that maps the prefix of the play so far to an action of the environment. We denote the sets of all strategies for the agent and for the environment by $\mathcal{M}_\out $ and $\mathcal{M}_\inp$ respectively.

Every pair of strategies $\rho_\out \in \mathcal{M}_\out$ for the agent and $\rho_\inp \in \mathcal{M}_\inp$ for the environment define a play, denoted by $\Pi(\rho_\out,\rho_\in)$. More precisely,  
$\Pi(\rho_\out,\rho_\in) = \overline{\pi} = (g_0,\symb_{\inp,0},\symb_{\out,0}, g_1) 
(g_1,\symb_{\inp,1}, \symb_{\out,1}, g_2) \ldots \in \plays(\game)$
where
for every $i \geq 0$, $\symb_{\inp,i} = \rho_\inp(\overline\pi[0,i])$ and $\symb_{\out,i} = \rho_\out(\overline\pi[0,i],\symb_{\inp,i})$.
Similarly, we define the set of plays starting at a state $g$ that are consistent with $\rho_\out$, denoted $\plays(\game,\rho_\out,g)$.

Given a game structure $\game$ and a winning condition $\varphi$ for the agent, we seek to synthesize a strategy $\rho_\out\in \mathcal{M}_\out$ for the agent such that for every strategy $\rho_\inp \in \mathcal{M}_\inp$ for the environment it holds that $\Pi(\rho_\inp,\rho_\out) \in \varphi$, i.e., all resulting plays satisfy $\varphi$.
In such cases we say that \emph{$\rho_\out$ satisfies $\spec$}, denoted $\rho_\out\models\spec$.


\section{Problem formulation}\label{sec:prob}

We represent \emph{runtime information} as $n$-dimensional real vectors, for a given $n \in \mathbb N$. We denote the set of all possible vector values for the runtime information by $\mathcal{P}\subseteq \mathbb{R}^n$. We score the performance of each play in the game using runtime information in $\mathcal{P}$ via a performance metric, $J: \plays(\game)\times\mathcal{P} \to \mathbb{R}$.


\begin{assum}\label{ass:max-cost}
For every $\overline p \in \mathcal{P}$ and every strategy $\rho_\out\in\mathcal{M}_\out$ such that $\rho_\out\models\spec$, there exists a strategy $\rho_\inp \in\mathcal{M}_\inp$ such that $  J(\Pi(\rho_\out,\rho_\inp), \overline p) \geq J(\Pi(\rho_\out,\rho_\inp'), \overline p)$ for every $\rho_\inp'\in\mathcal{M}_\inp$. 
\end{assum}
Assumption~\ref{ass:max-cost} ensures a well-defined cost function using the metric $J$. We can thus define
\begin{align}
C(\rho_\out,\overline{p}) = max_{\rho_\inp \in\mathcal{M}_\inp}J(\Pi(\rho_\out,\rho_\inp), \overline p)
\end{align}
as the cost function, with $C:\Sigma_\out \times \mathcal{P} \to \mathbb{R}$.
Given the runtime information $\overline p \in \mathcal P$, a strategy $\rho_\out \in \mathcal M_\out$ for the agent that satisfies $\spec$ is \emph{optimal for $\overline p$} if and only if it is a solution to the following optimization problem.
\begin{align}
    \begin{array}{rl}
        \underset{\rho_\out\in
        \mathcal{M}_\out}{\mathrm{minimize}}&\quad
        \ \\
        \mathrm{subject\ to}&\quad \rho_\out\models \varphi
    \end{array}\label{prob:opt}%
\end{align}
Let $C^\ast: \mathcal{P} \to \mathbb{R}$ denote the optimal value of \eqref{prob:opt}.

\begin{itshape}
\textbf{Example.} Consider Figure~\ref{fig:gazeboworld}, where the robot has to infinitely often meet the human. Assume that the human can only be in rooms $R_1, R_4, R_8$. Let $q_1,q_2,q_3\in[0,1]$ be the probabilities of the human being in room $R_1$, $R_4$ and $R_8$ respectively. The runtime information is $\overline{p}= {[q_1\ q_2\ q_3]}^\top \in\mathcal{P}\subseteq \mathbb{R}^3$, where the set $\mathcal{P}$ is the probability simplex. The cost function is, 
\begin{align}
C(\rho_\out, \overline{p}) &= \mathbb E\lbrack\text{time to find human\rbrack}= \sum\nolimits_{i=1}^N T_i(\rho_\out)\, q_i\label{eq:cost_fun}
\end{align}
where $T_i$ is the time taken to reach room $i$ under the robot strategy $\rho_\out$. Figure \ref{fig:trivialtrajs} shows the resulting continuous trajectories from executing policies when the information vector tells the robot the exact room occupied by the human. \label{ex:toy_ex2}
\end{itshape}



Given a set of representative strategies associated with instances of the runtime information, the task at runtime then becomes one of choosing a strategy depending on the current value of $\overline p$. As we have a finite set of strategies to choose from, the resulting behaviour of the agent will be \emph{approximately optimal}. Thus, we consider the problem of synthesizing an \emph{approximately optimal switching function} that also guarantees $\spec$.

\begin{definition}
Given a game structure $\game$ and a set of strategies $\{{\rho_\out}_i\in \mathcal{M}_\out\}_{i=1}^N$ for the agent, a \emph{switching function} is a function $\tau : \prefs(\game) \times \mathcal P^* \to \{1,\ldots,N\}$, which maps a play prefix and the sequence of values of $\overline p$ seen so far, to an index of a strategy in the given set.

The set of plays resulting from applying the switching function $\tau$ to $\{{\rho_\out}_i\in \mathcal{M}_\out\}_{i=1}^{N}$ is defined as the set of plays 
$\plays(\{{\rho_\out}_i\in\mathcal{M}_\out\}_{i=1}^{N},\tau)$ such that 
$\overline{\pi} = (g_0,\symb_{\inp,0},\symb_{\out,0}, g_1) 
(g_1,\symb_{\inp,1}, \symb_{\out,1}, g_2) \ldots \in \plays(\{{\rho_\out}_i\in \mathcal{M}_\out\}_{i=1}^{N},\tau)$
if and only if there exists $\overline \gamma \in \mathcal{P}^\omega$ such that
for every $i \geq 0$,  it holds that $\symb_{\out,i} ={\rho_\out}_{\tau(\overline \pi[0,i],\overline \gamma[0,i])}(\pi[0,i],\symb_{\inp,i})$.
\end{definition}

Informally, we want to be able to \emph{switch} between pre-computed strategies based on values of the runtime information. In order to not violate the specification, switching needs to take into account the prefix of the play. We formalize this task below. 

\begin{prob}\label{prob_st:main}
   We are given a game $(\game,\varphi)$, 
   a set $\{\overline{p}_i\in \mathcal{P}\}_{i=1}^{N}$ of representative values of the runtime information, and 
    strategies $\{{\rho_\out^\ast}_i\in \mathcal{M}_\out\}_{i=1}^{N}$ such that ${\rho_\out^\ast}_i$ solves \eqref{prob:opt} for $\overline{p}_i$. 

   Given $\epsilon > 0$, compute a collection $\{\mathcal S_i\}_{i=1}^N$ of subsets of $\mathbb{R}^n$ such that $\overline{p}_i \in \mathcal{S}_i$, and a  switching function $\tau : \prefs(\game) \times \mathcal P^+ \to \{1,\ldots,N\}$ that satisfies the following conditions.
   \begin{itemize}
       \item For all $i =1,\ldots,N$ and all $\overline p \in \mathcal S_i \cap \mathcal P$ it holds that
    \begin{align}
  C({\rho_\out}_i^\ast,
    \overline{p}) - \epsilon \leq
        C^\ast(\overline{p}) \leq
        C({\rho_\out}_i^\ast,
    \overline{p}).
    \label{eq:bounds}
   \end{align}  
   \item $\overline\pi \in \varphi$ for every play $\overline \pi \in \plays(\{{\rho_\out}_i\in\mathcal{M}_\out\}_{i=1}^{N},\tau)$.
   \end{itemize}

\end{prob}

\section{Synthesis of correct-by-construction strategies with near-optimality guarantees}

We address Problem~\ref{prob_st:main} by constructing a switching function that guarantees that the resulting plays satisfy the task specification $\spec$. We also provide suboptimality bounds for the performance when using the proposed method. We utilize existing synthesis techniques~\cite{Ehlerscost} to synthesize for each $i \in \{1,\ldots,N\}$ a strategy that is \emph{correct}, i.e., satisfies $\spec$, and optimal for the specific $\overline{p}_i$.

\subsection{Suboptimality bounds for unknown runtime information}
\label{sec:generalization}
Let $\{\overline{p}_i\in \mathcal{P}\}^{N}_{i=1}$ be the given candidate instantiations of the runtime information, and let 
$\{{\rho_\out^\ast}_i\in \mathcal{M}_\out\}_{i=1}^{N}$ be the corresponding optimal strategies. That is, for each $i$, ${\rho_\out}_i^\ast$ is an optimal solution of \eqref{prob:opt} for $\overline{p}_i$. 
Under Assumption~\ref{assum:C_lin} below, we can compute a collection of polytopes ${\{\mathcal{S}_i\}}_{i=1}^N$ such that ${\rho_\out^\ast}_i$ is $\epsilon$-optimal, whenever $\overline p \in \mathcal{S}_i$.
\begin{assum}\label{assum:C_lin}
  The cost function $C: \mathcal M_\out\times \mathcal{P}
  \to \mathbb{R}$ is such that for all $\rho_\out \in \mathcal M_\out$ and $\overline p = {[q_1\ q_2\ \ldots\ q_n]}^\top \in \mathcal P \subseteq \mathbb{R}^n$:
  \begin{align}
        C(\rho_\out, \overline{p}) = \sum_{i=1}^n
        C(\rho_\out, \overline{e}_i)\, q_i,\label{eq:V_lin_decompose}
    \end{align}
    where $\overline{e}_i$ is the vector that has $1$ at
     position $i$ and $0$ elsewhere.
\end{assum}

We address Problem~\ref{prob_st:main} by defining a collection of polytopes $\{\mathcal{S}_i\}_{i=1}^N$, where $\mathcal{S}_i= \{ \overline{p} \in \mathbb{R}^n \mid H_i \overline{p}\leq
\overline{b}_i\}$ and
{\small\begin{align}
    H_i &= \left[
        \begin{array}{ccc}
            C({\rho_\out}_i^\ast, \overline{e}_1) -
            C({\rho_\out}_1^\ast, \overline{e}_1) &
            \cdots &
            C({\rho_\out}_i^\ast, \overline{e}_n) -
            C({\rho_\out}_1^\ast, \overline{e}_n)\\
            C({\rho_\out}_i^\ast, \overline{e}_1) -
            C({\rho_\out}_2^\ast, \overline{e}_1) &
            \cdots &
            C({\rho_\out}_i^\ast, \overline{e}_n) -
            C({\rho_\out}_2^\ast, \overline{e}_n)\\
            \vdots & \vdots & \vdots \\
            C({\rho_\out}_i^\ast, \overline{e}_1) -
            C({\rho_\out}_N^\ast, \overline{e}_1) &
            \cdots &
            C({\rho_\out}_i^\ast, \overline{e}_n) -
            C({\rho_\out}_N^\ast, \overline{e}_n)\\
            C({\rho_\out}_i^\ast, \overline{e}_1) -
            C^\ast(\overline{e}_1) &
            \cdots &
            C({\rho_\out}_i^\ast, \overline{e}_n) -
            C^\ast(\overline{e}_n)
    \end{array}\right] \label{eq:A_defn} \\
        \overline{b}_i&={[0\ 0\ \ldots\ 0\
        \epsilon]}^\top\in \mathbb{R}^{N+1}\label{eq:b_defn}
\end{align}}\normalsize%
with $H_i\in \mathbb{R}^{(N+1)\times n}$. By \eqref{prob:opt} and Assumption~\ref{assum:C_lin}, we have the following upper bound on $C^\ast(\overline{p})$ for any runtime information vector $ \overline{p}={[q_1\ q_2\ \ldots\ q_n]}^\top\in
\mathcal{P}$,
\begin{align}
    C^\ast( \overline{p})\leq
    C({\rho_\out}_i^\ast,
    \overline{p})=\sum_{j=1}^nC({\rho_\out}_i^\ast,
    \overline{e}_j)q_j\label{eq:C_ub},\quad \forall
    i\in\{1,\ldots,N\}.
\end{align}

Theorem~\ref{thm:bounds} below ensures that using ${\rho_\out^\ast}_i$ for vectors $\overline p \in \mathcal{S}_i$ results in $\epsilon$-optimal performance, provided the polytopes $\mathcal{S}_i$ are non-empty. In other words, the difference between $C^\ast(\overline p)$, the optimal performance for a runtime information $\overline{p}$,  and $C(\rho_{\out_i}^\ast, \overline p)$, the attained performance due to choice of strategy $\rho_{\out_i}^\ast$,  can not be larger than $\epsilon$, whenever $\overline p \in \mathcal{S}_i$. To complete the discussion, we provide a sufficient condition for non-empty polytopes $\mathcal{S}_i$ in Proposition~\ref{prop:nonempty}.
 
\begin{thm}\label{thm:bounds}
    Let the polytopes $\mathcal{S}_i$ be non-empty. Given runtime information $ \overline{p} \in \mathbb{R}^n$,
    if $ \overline{p}\in \mathcal{S}_i$ for some
    $i\in\{1,\ldots, N\}$, then
    \begin{align}
        C({\rho_\out}_i^\ast,
    \overline{p}) - \epsilon \leq
        C^\ast(\overline{p}) \leq
        C({\rho_\out}_i^\ast,
    \overline{p}). \nonumber
    \end{align}
\end{thm}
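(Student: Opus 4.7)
The claim splits cleanly into an upper bound and a lower bound, so I would handle them separately. The upper bound $C^\ast(\overline{p}) \leq C({\rho_\out}_i^\ast, \overline{p})$ is essentially free: ${\rho_\out}_i^\ast$ was constructed as an optimal (hence in particular feasible) solution of \eqref{prob:opt} for the parameter $\overline{p}_i$, so ${\rho_\out}_i^\ast \models \spec$, which makes it a feasible candidate for \eqref{prob:opt} at any other parameter value. Thus the optimum $C^\ast(\overline{p})$ cannot exceed the cost attained by ${\rho_\out}_i^\ast$. This is also exactly inequality~\eqref{eq:C_ub} already noted in the paper.

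For the lower bound, I would unpack what $\overline p \in \mathcal S_i$ actually says. The matrix $H_i$ in \eqref{eq:A_defn} has $N+1$ rows; the first $N$ rows encode the "best-of-the-candidates" conditions $C({\rho_\out}_i^\ast, \overline{p}) \leq C({\rho_\out}_k^\ast, \overline{p})$ for $k=1,\ldots,N$ (after invoking Assumption~\ref{assum:C_lin} on each side). The essential row is the last one, which says
\begin{equation*}
\sum_{j=1}^{n} \bigl(C({\rho_\out}_i^\ast, \overline{e}_j) - C^\ast(\overline{e}_j)\bigr)\, q_j \;\leq\; \epsilon.
\end{equation*}
Applying Assumption~\ref{assum:C_lin} to the first piece rewrites this as $C({\rho_\out}_i^\ast, \overline{p}) \leq \sum_{j} C^\ast(\overline{e}_j)\, q_j + \epsilon$, so the job reduces to showing $\sum_{j} C^\ast(\overline{e}_j)\, q_j \leq C^\ast(\overline{p})$.

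To close that gap I would argue as follows. Fix any feasible $\rho_\out \models \spec$. By definition $C^\ast(\overline{e}_j) \leq C(\rho_\out, \overline{e}_j)$ for each basis vector. Multiplying by the (nonnegative) weights $q_j$ and summing, then applying Assumption~\ref{assum:C_lin} to collapse the sum, gives $\sum_j q_j C^\ast(\overline{e}_j) \leq \sum_j q_j C(\rho_\out, \overline{e}_j) = C(\rho_\out, \overline{p})$. Since this holds for every feasible $\rho_\out$, taking the infimum over $\rho_\out$ on the right yields $\sum_j q_j C^\ast(\overline{e}_j) \leq C^\ast(\overline{p})$. Chaining this with the inequality extracted from the last row of $H_i \overline{p} \leq \overline{b}_i$ produces $C({\rho_\out}_i^\ast, \overline{p}) - \epsilon \leq C^\ast(\overline{p})$.

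The one genuinely delicate point, and the only place where care is needed, is the sign condition $q_j \geq 0$ used to push the minimum through the weighted sum. The theorem is stated for $\overline{p} \in \mathbb{R}^n$, but the probabilistic interpretation of runtime information (and the fact that Problem~\ref{prob_st:main} ultimately considers $\overline p \in \mathcal S_i \cap \mathcal P$) makes this nonnegativity natural; I would simply note this as an implicit hypothesis inherited from $\mathcal P$ lying in the nonnegative orthant. Everything else is bookkeeping on linear combinations via Assumption~\ref{assum:C_lin}.
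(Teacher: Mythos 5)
Your proof is correct and follows essentially the same route as the paper's: the upper bound from feasibility of ${\rho_\out}_i^\ast$ at any parameter value, and the lower bound by extracting the last row of $H_i\overline{p}\leq\overline{b}_i$ and showing $\sum_j C^\ast(\overline{e}_j)\,q_j \leq C^\ast(\overline{p})$ (the paper's $\overline{\ell}^\top\overline{p}\leq C^\ast(\overline{p})$ step). Your explicit flagging of the hypothesis $q_j\geq 0$ is a detail the paper's proof uses silently, and you are right that it is inherited from $\mathcal{P}$ lying in the nonnegative orthant rather than from the literal statement $\overline{p}\in\mathbb{R}^n$.
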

\begin{proof}
   The upper bound on $C^\ast( \overline{p})$ follows from
   \eqref{eq:C_ub}. Consider the collection of polytopes $
    \mathcal{T}_i$ constructed using the first $N$
    hyperplanes in \eqref{eq:A_defn} and \eqref{eq:b_defn}.
    For every $ \overline{p}\in \mathcal{T}_i$,
    \begin{align}
        C({\rho_\out}_i^\ast, \overline{p})&\leq
        C({\rho_\out}_k^\ast, \overline{p}),\ \forall
        k\in\{1,\ldots,N\}\setminus\{i\}\label{eq:dom}.
    \end{align}
    In other words, among the $N$  strategies
    ${\rho_\out}_{(\cdot)}^\ast$,
    ${\rho_\out}_{i}^\ast$ provides the tightest upper
    bound on $C^\ast( \overline{p})$ due to \eqref{eq:C_ub}.

   We next prove the lower bound. The last
   hyperplane in \eqref{eq:A_defn} and \eqref{eq:b_defn}
   guarantees that $ C(
   {\rho_\out}_i^\ast, \overline{p}) -  \overline{\ell}^\top \overline{p} \leq \epsilon$ for
   every $ \overline{p}\in \mathcal{S}_i$, where $\overline{\ell}_j = C^\ast( \overline{e}_j)$. On adding and
   subtracting $C^\ast( \overline{p})$, we have
   $C({\rho_\out}_i^\ast, \overline{p}) - C^\ast(
   \overline{p}) + C^\ast(\overline{p}) -
   \overline{\ell}^\top \overline{p} \leq \epsilon$. Since
   $C( \cdot, \overline{e}_j)\geq \ell_j$ by definition of $
   \overline{\ell}$, we have $C^\ast(\overline{p}) -
   \overline{\ell}^\top \overline{p} \geq 0$ for every $
   \overline{p}\in \mathcal{S}_j$. Therefore, $C({\rho_\out}_i^\ast, \overline{p}) - C^\ast(
   \overline{p}) \leq \epsilon$.
\end{proof}
\begin{prop}\label{prop:nonempty}
    Let Assumption~\ref{assum:C_lin} hold. For every $i=1,\ldots,N$ the polytope $\mathcal{S}_i$ is non-empty, provided that $\epsilon \geq \max_i\left\{ C( {\rho_\out}_i^\ast, \overline{p}_i) -
    \overline{\ell}^\top \overline{p}_i\right\}$. Here,
    $\overline{\ell}={[C^\ast( \overline{e}_1)\ C^\ast(\overline{e}_2)\ \ldots\ C^\ast( \overline{e}_n)]}^\top\in \mathbb{R}^n$.
\end{prop}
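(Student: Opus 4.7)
The plan is to exhibit an explicit witness for membership in $\mathcal{S}_i$; the natural candidate is the representative vector $\overline{p}_i$ itself, since ${\rho_\out}_i^\ast$ was synthesized precisely as an optimal, $\varphi$-satisfying strategy at $\overline{p}_i$. The proof then reduces to verifying that $\overline{p}_i$ satisfies each of the $N+1$ rows of the polytope constraint $H_i\overline{p} \leq \overline{b}_i$.

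First I would apply Assumption~\ref{assum:C_lin} to rewrite $H_i\overline{p}$ in closed form. Row $k$, for $k \in \{1,\ldots,N\}$, evaluates to $C({\rho_\out}_i^\ast, \overline{p}) - C({\rho_\out}_k^\ast, \overline{p})$, while the final row evaluates to $C({\rho_\out}_i^\ast, \overline{p}) - \overline{\ell}^\top \overline{p}$. Substituting $\overline{p}=\overline{p}_i$, the first $N$ constraints reduce to $C({\rho_\out}_i^\ast, \overline{p}_i) \leq C({\rho_\out}_k^\ast, \overline{p}_i)$ for each $k$. These are direct consequences of the optimality of ${\rho_\out}_i^\ast$ in problem~\eqref{prob:opt} at $\overline{p}_i$, coupled with the fact that every competitor ${\rho_\out}_k^\ast$ is $\varphi$-feasible (it solves~\eqref{prob:opt} at its own $\overline{p}_k$, and the feasibility constraint $\rho_\out\models\varphi$ does not depend on the parameter).

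For the $(N{+}1)$-th row, the same rewriting at $\overline{p}_i$ yields $C({\rho_\out}_i^\ast, \overline{p}_i) - \overline{\ell}^\top \overline{p}_i \leq \epsilon$. By the hypothesis $\epsilon \geq \max_j\{C({\rho_\out}_j^\ast, \overline{p}_j) - \overline{\ell}^\top \overline{p}_j\}$, the left-hand side (the $j=i$ term) is bounded above by the maximum, hence by $\epsilon$. Combining both verifications gives $\overline{p}_i \in \mathcal{S}_i$, so $\mathcal{S}_i \neq \emptyset$.

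I do not anticipate any serious obstacle here: the proposition is essentially a consistency check ensuring that the polytope specification in~\eqref{eq:A_defn}--\eqref{eq:b_defn} admits its own representative vector as a feasible point once $\epsilon$ exceeds the intrinsic gap between $C({\rho_\out}_i^\ast, \overline{p}_i)$ and the linear lower envelope $\overline{\ell}^\top \overline{p}_i$ built from the per-coordinate optima $C^\ast(\overline{e}_j)$. The only subtlety worth flagging is that $\overline{\ell}^\top \overline{p}$ need not coincide with $C^\ast(\overline{p})$ in general, which is precisely why the threshold for $\epsilon$ must be stated as a maximum over $j$ rather than as zero.
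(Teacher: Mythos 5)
Your proof is correct and follows the same route as the paper's: both exhibit $\overline{p}_i$ as an explicit feasible point, using the optimality of ${\rho_\out}_i^\ast$ at $\overline{p}_i$ (together with Assumption~\ref{assum:C_lin}) to verify the first $N$ rows and the hypothesis on $\epsilon$ to verify the last row. Your version is simply more explicit about why each competitor ${\rho_\out}_k^\ast$ is feasible for~\eqref{prob:opt} at $\overline{p}_i$ and about the gap between $\overline{\ell}^\top\overline{p}$ and $C^\ast(\overline{p})$, which the paper leaves implicit.
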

\begin{proof}
    The polytopes $ \mathcal{T}_i$ (defined in the proof of Theorem~\ref{thm:bounds}) are non-empty, since they
    contain $ \overline{p}_i$ by the optimality of $
    {\rho_\out}_i^\ast$ in \eqref{prob:opt}.
    The last hyperplane in \eqref{eq:A_defn} and
    \eqref{eq:b_defn} is also satisfied by $
    \overline{p}_i$, thanks to the use of $\epsilon$ in $\overline b_i$.
    Thus, its intersection with $ \mathcal{T}_i$, which
    yields the polytope $ \mathcal{S}_i$, is non-empty.
\end{proof}

\subsection{Switching function construction}\label{sec:switching}
In order to guarantee that the plays resulting from switching between the synthesized representative strategies satisfy the specification $\spec$, the switching function needs to keep track the satisfaction of the agent's liveness guarantees $\LTLglobally\LTLfinally F_i$ in $\spec$. Since the cost function $C$ captures the cost of achieving the liveness guarantees, when the specification is satisfied due to a violation of the environment assumptions, no switching would be necessary, as the cost would be $0$.

To ensure that the switching between strategies does not prevent the agent from infinitely often visiting each of the sets $F_i$, the switching function will keep track of these visits, and only allow switching to a different strategy once all the sets $F_i$ have been visited under the current strategy. Furthermore, the switch can only occur from a state from which the next strategy can guarantee the satisfaction of $\spec$. Below we make this intuition precise by providing the construction of the switching function as a finite-state system.

Let $\{{\rho_\out^\ast}_i\in \mathcal{M}_\out\}_{i=1}^{N}$ be a set of strategies  for the agent such that ${\rho_\out}_i^\ast \models \spec$ for each $i \in \{1,\ldots,N\}$, and let $\{\mathcal S_i\}_{i=1}^N$ be the polytopes computed as in Section~\ref{sec:generalization}.

For each ${\rho_\out^\ast}_i$, let $W_i = \{g\in G \mid \plays(\game,{\rho_\out^\ast}_i,g)\subseteq \spec\}$ denote the set of states from which the specification can be enforced by following the strategy ${\rho_\out^\ast}_i$.

We define a finite state transition system with states $Q$, initial state $q_0$, transition function $\theta$ and alphabet $(G \times \ialphabet \times \oalphabet \times G) \times \{\mathcal S\}_{i=1}^N$. The set of states is $Q = \{ V \mid V \subseteq \{1,\ldots, n\}\} \times \{1,\ldots,N\}$, where $n$ is the number of liveness guarantees in $\spec$. States in $Q$ track the guarantees that have been satisfied and contain the index of the currently chosen strategy. The initial state is $q_0 = (\{1,\ldots,n\},1)$. The transition function $\theta : Q \times ((G \times \ialphabet \times \oalphabet \times G) \times \{\mathcal S\}_{i=1}^N) \to Q$ is defined such that 
$\theta((V,i),((g,\symb_{\inp}, \symb_{\out}, g'),\mathcal S)) = (V',i')$, where
\begin{itemize}
\item if $V = \{1,\ldots,n\}, \mathcal S = \mathcal S_{j}, g' \in W_{j}$, then, $V' = \emptyset, i'=j$,
\item $V' = V \cup \{j \in \{1,\ldots,n\} \mid g \in F_j\}$ and $i'=i$ otherwise.
\end{itemize}
That is, once all the sets $F_j$ have been visited under the current strategy, we can switch to the $i'$-th strategy and reset the tracking set to $\emptyset$. Otherwise we record the indices of the visited sets and keep the strategy index the same. We can extend $\theta$ to words in the usual way. 

We define the switching function such that 
$\tau (\varepsilon,\overline p_0) = \min(\{i \mid p_0 \in \mathcal S_i\}\cup \{N\})$, where $\varepsilon$ is the empty word, and for every 
$\overline \pi =(g_0,\symb_{\inp,0},\symb_{\out,0}, g_1) \ldots
(g_k,\symb_{\inp,k}, \symb_{\out,k}, g_{k+1})$ and every $\overline \gamma = \overline p_0\ldots\overline p_{k+1}$ we let
$\tau (\overline \pi, \overline\gamma) = i$ where
$\theta(((g_0,\symb_{\inp,0},\symb_{\out,0}, g_1),\mathcal S_{i_1}) \ldots
((g_k,\symb_{\inp,k}, \symb_{\out,k}, g_{k+1}),\mathcal S_{i_{k+1}})) = (V,i)$ for some $V$, where for all $j\geq 1$ we have $i_j  = \min(\{i \mid \overline p_j \in \mathcal S_i\text{ and }g_j \in W_{i_j}\} \cup \{N\})$.

As the switching function $\tau$ only allows switching to a different strategy once all $F_j$ have been visited under the current strategy, this means that if we switch strategy infinitely often then the liveness guarantee is satisfied. If, on the other hand, we stabilize at some strategy, $\spec$ is again guaranteed by the fact that this strategy satisfies the specification.

\subsection{Discussion}
The key advantage of our approach is that we avoid the re-synthesis of strategies for each new value of the runtime information, when the parameter set is covered by the collection of polytopes ${\{\mathcal{S}_i\}}_{i=1}^N$,  $\mathcal{P}\subseteq\cup_{i=1}^N \mathcal{S}_i$.  We also avoid discretization of the set $\mathcal{P}$. This enables on-board deployment of our approach with guaranteed $\epsilon$-optimal performance. In contrast, traditional approaches rely either on re-synthesis or on discretization of the parameter space which requires either prohibitively high computational or memory costs~\cite{jangcontinuous,smith2011optimal}. When $\mathcal{P}\not\subseteq\cup_{i=1}^N \mathcal{S}_i$, none of the synthesized strategies guarantees $\epsilon$-optimal performance for the parameter values in $\mathcal{P}\setminus\cup_{i=1}^N \mathcal{S}_i$. In such cases, we can iteratively expand the candidate instantiations offline till the entire parameter space $\mathcal{P}$ is covered. Specifically, we add to the candidate instantiations randomly chosen parameter values in $\mathcal{P}\setminus\cup_{i=1}^N \mathcal{S}_i$. In future, we intend to investigate sufficient conditions under which such an expansion approach terminates in finite number of steps.

\section{Experiments}
All experiments we report on were performed on an Intel i5-5300U 2.30 GHz CPU with 8 GB of RAM.  We used the tool \texttt{Slugs}~\cite{Ehlerslugs} for the strategy synthesis. 

\subsection{Robot motion planning}

We consider the example discussed in Section~\ref{sec:prob} (Figure~\ref{fig:gazeboworld}).
Formally, the specification is
\[
    \varphi = \LTLglobally (h \in R_1 \cup R_4 \cup R_8) \rightarrow \Big( \LTLglobally \LTLfinally (r = h) \wedge \LTLglobally (\mathrm{Energy} > 0) \Big),
\]
where $h$ and $r$ are variables modelling the human and robot positions respectively, and $\mathrm{Energy}$ is the robot's energy level.
The cost function $C(\cdot)$ is given in \eqref{eq:cost_fun}.
The runtime information $\overline{p}$ is the probability distribution over the human's possible locations - $R_1, R_4,R_8$. 
In our experiments, we used a Bayesian update to compute $\overline{p}$ using the current (and past) observations of the human's position. 

\begin{table}
    \centering
    \begin{tabular}{|c|c|c|c|c|}\toprule
        Query point  & \multirow{2}{*}{Strategy} & \multicolumn{3}{c|}{Cost}  \\\cmidrule{3-5}
        $\overline{p}$  & & L. bound & U. bound &  $C^\ast(\overline{p})$\\  \midrule
        $[0.1, 0.8, 0.1]$ & $\rho_{\out_2}$ & 7.19 & 10.4 & 9.5 \\ 
        $[0.0, 0.1, 0.9]$ & $\rho_{\out_3}$  & 6.39 & 9.6 & 7.9 \\ 
        \rowcolor{red!10} $[0.6, 0.3, 0.1]$ & -- & -- & -- & 11.1\\ \bottomrule
    \end{tabular}
    \caption{Lower and upper bounds (Theorem~\ref{thm:bounds}) and the optimal value $C^\ast(\overline{p})$ for some runtime information vectors $\overline{p}$.}
    \label{tab:perf}
\end{table}{}
\begin{figure}
    \centering
%
%
\definecolor{mycolor1}{rgb}{1.00000,0.00000,1.00000}%
\definecolor{mycolor2}{rgb}{0.00000,1.00000,1.00000}%
\begin{tikzpicture}[scale=0.95]

\begin{axis}[%
width=0.8\linewidth,
at={(5.308in,1.073in)},
scale only axis,
plot box ratio=1 1 1,
xmin=-0.05,
xmax=1.2,
xlabel = {$p_1$},
tick align=outside,
ymin=-0.05,
ymax=1.2,
ylabel = {$p_2$},
zmin=-0.05,
zmax=1.2,
zlabel = {$p_3$},
view={105}{35},
axis line style={draw=none},
ticks=none,
axis x line*=bottom,
axis y line*=left,
axis z line*=left,
xmajorgrids,
ymajorgrids,
zmajorgrids,
legend cell align={left},
legend style={at={(0.6,0.95)}, anchor=north west, draw=white!0.0!black},
]

\addplot3[area legend, line width=1.0pt, draw=black, fill=red, fill opacity=0, forget plot]
table[row sep=crcr] {%
x	y	z\\
0.999999999999999	3.88578058618805e-16	4.44089209850063e-16\\
1.66533453693773e-16	1	3.33066907387547e-16\\
4.44089209850063e-16	3.88578058618805e-16	0.999999999999999\\
}--cycle;
\addplot3[area legend, dotted, line width=1.0pt, draw=black, fill=green, fill opacity=0.05, forget plot]
table[row sep=crcr] {%
x	y	z\\
0.5	4.16333634234434e-16	0.5\\
0.999999999999999	4.16333634234434e-16	3.60822483003176e-16\\
0.5	0.5	4.71844785465692e-16\\
0.333333333333333	0.333333333333333	0.333333333333333\\
}--cycle;

\addplot3[area legend, dashed, line width=1.0pt, draw=black, fill=green, fill opacity=0.3, forget plot]
table[row sep=crcr] {%
x	y	z\\
0.999999999999999	1.52655665885959e-16	4.85722573273506e-16\\
0.59875	1.52655665885959e-16	0.40125\\
0.799375	0.200625	4.85722573273506e-16\\
}--cycle;

\addplot3[area legend, dotted, line width=1.0pt, draw=black, fill=mycolor1, fill opacity=0.05, forget plot]
table[row sep=crcr] {%
x	y	z\\
0.5	0.5	-3.88578058618805e-16\\
4.71844785465692e-16	1	-3.88578058618805e-16\\
3.88578058618805e-16	0.333333333333333	0.666666666666666\\
0.333333333333333	0.333333333333333	0.333333333333333\\
}--cycle;

\addplot3[area legend, dashed, line width=1.0pt, draw=black, fill=mycolor1, fill opacity=0.3, forget plot]
table[row sep=crcr] {%
x	y	z\\
-3.46944695195361e-16	1	1.38777878078145e-17\\
-2.4980018054066e-16	0.59875	0.401250000000001\\
0.200625	0.799375	1.38777878078145e-17\\
}--cycle;

\addplot3[area legend, dotted, line width=1.0pt, draw=black, fill=mycolor2, fill opacity=0.05, forget plot]
table[row sep=crcr] {%
x	y	z\\
0.5	-8.32667268468867e-17	0.5\\
2.22044604925031e-16	-5.55111512312578e-17	1\\
2.22044604925031e-16	0.333333333333333	0.666666666666666\\
0.333333333333333	0.333333333333333	0.333333333333333\\
}--cycle;

\addplot3[area legend, dashed, line width=1.0pt, draw=black, fill=mycolor2, fill opacity=0.3, forget plot]
table[row sep=crcr] {%
x	y	z\\
2.4980018054066e-16	-3.33066907387547e-16	1\\
2.22044604925031e-16	0.200625	0.799375\\
0.401250000000001	-3.88578058618805e-16	0.59875\\
}--cycle;
\addplot3[only marks, mark=*, mark options={}, mark size=3.5pt, color=black, fill=green] table[row sep=crcr]{%
x	y	z\\
0.7	0.1	0.2\\
};\addlegendentry{$\overline{p_1} = \lbrack 0.7, 0.1, 0.2 \rbrack$}

\addplot3[only marks, mark=*, mark options={}, mark size=3.5pt, color=black, fill=mycolor1] table[row sep=crcr]{%
x	y	z\\
0.1	0.7	0.2\\
};\addlegendentry{$\overline{p_2} = \lbrack 0.1, 0.7, 0.2 \rbrack$}

\addplot3[only marks, mark=*, mark options={}, mark size=3.5pt, color=black, fill=mycolor2] table[row sep=crcr]{%
x	y	z\\
0.2	0.1	0.7\\
};\addlegendentry{$\overline{p_3} = \lbrack 0.2, 0.1, 0.7 \rbrack$}

\addplot3[->, color=black, line width = 2.0pt,point meta={sqrt((\thisrow{u})^2+(\thisrow{v})^2+(\thisrow{w})^2)}, point meta min=0, quiver={u=\thisrow{u}, v=\thisrow{v}, w=\thisrow{w}, every arrow/.append}]
 table[row sep=crcr] {%
x	y	z	u	v	w\\
0	0	0	1.125	0	0\\
};
\addplot3[->, color=black, line width = 2.0pt, point meta={sqrt((\thisrow{u})^2+(\thisrow{v})^2+(\thisrow{w})^2)}, point meta min=0, quiver={u=\thisrow{u}, v=\thisrow{v}, w=\thisrow{w}, every arrow/.append }]
 table[row sep=crcr] {%
x	y	z	u	v	w\\
0	0	0	0	1.125	0\\
};
\addplot3[->, color=black,line width = 2.0pt, point meta={sqrt((\thisrow{u})^2+(\thisrow{v})^2+(\thisrow{w})^2)}, point meta min=0, quiver={u=\thisrow{u}, v=\thisrow{v}, w=\thisrow{w}, every arrow/.append}]
 table[row sep=crcr] {%
x	y	z	u	v	w\\
0	0	0	0	0	1.125\\
};
\end{axis}
\end{tikzpicture}%
    \caption{State space partitioning of the runtime information vector $\overline{p}$ for $\epsilon = 3.2$. For runtime information vector belong to the darker shaded regions, we obtain $\epsilon$-optimality by reusing a specific strategy $\rho_{\out_{(\cdot)}}$, and avoid computationally expensive re-synthesis.}
    \label{fig:partition}
\end{figure}
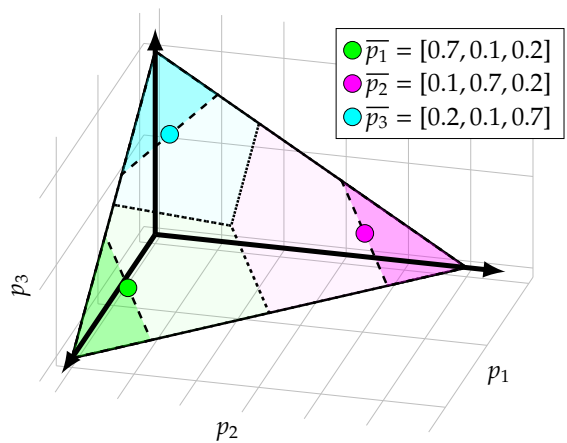



We used three candidate instantiations of $\overline{p}$ (Figure~\ref{fig:partition}). The robot only has enough charge to visit one of the three rooms and return to the charging station. The optimal robot strategy for each $\overline p_i$ corresponds to an \emph{ordering} of which room to visit. Intuitively, the robot will visit rooms in decreasing order of likelihood of a human being there, by executing the continuous trajectories shown in Figure~\ref{fig:trivialtrajs}.

Figure~\ref{fig:partition} shows the partition of the space of $\mathcal{P}$ generated from the corresponding polytopes $\mathcal{S}_i$ for $\epsilon = 3.2$. The choice of $\epsilon$ is dictated by Proposition~\ref{prop:nonempty}. The three candidate instantiations of $\overline{p}$ are represented by colored dots. The darker coloured regions are the polytopes $\mathcal{S}_i$, and they correspond to the regions of $\mathcal{P}$ in which the corresponding strategy is $\epsilon$-optimal. Note that $\mathcal{P}\not\subseteq\cup_{i=1}^N\mathcal{S}_i$, and there are parameters in $\mathcal{P}$ were none of the three strategies are $\epsilon$-optimal. The light shaded areas around each $\overline{p}_i$ corresponds to the portion of the parameter space in which the correspondingly coloured strategy dominates the others, but is not $\epsilon$-optimal.

Table~\ref{tab:perf} shows the proposed optimality bounds obtained from our approach (Theorem~\ref{thm:bounds}). On comparing with the lowest delay possible (computed via re-synthesis), we see that the computed bounds holds in the first two rows.   
The last row has $\overline{p} = \lbrack 0.6,0.3,0.1 \rbrack$ lying outside $\cup_{i=1}^N\mathcal{S}_i$ (outside of the dark shaded region in Figure~\ref{fig:partition}), has no informative bounds. Here, $\rho_{\out_1}$ is the dominating strategy among $\rho_{\out_{(\cdot)}}$, with $C(\rho_{\out_1},\overline{p})=13.6$.



A video of the simulation of the robot meeting the human (infinitely often) as the human moves in real-time can be found at \url{https://youtu.be/pn6afwf5INc}.

\subsection{Urban air mobility traffic management}

We now consider an automated air traffic management system for urban air mobility (UAM) operations. The controller is required to optimize the throughput of a multi-pad UAM port, along with bounding the delays experienced by vehicles and passengers. We synthesize a controller for a UAM hub, which consists of a grouping of multiple UAM vertiports. The hub has restrictions on the number of aircraft it is allowed to simultaneously land across all vertiports. Hence, a controller, if necessary, must make incoming air vehicles wait until it is able to safely allow them to land. In this example, we consider three vertiports --- A ($red$), B ($yellow$), and C ($blue$) where an aircraft can request to land. Formally, the task specification is
\vspace{-0.12cm}
\begin{multline*}
    \varphi = \LTLglobally (\text{Current Requests} < R) 
\rightarrow \\ \qquad\;\LTLglobally (\text{No. Landing Aircraft} < M) \; \wedge \\ \LTLglobally (\text{Land Request} \rightarrow \LTLfinally \text{Land Allowed}),
\end{multline*}
where $R$ is maximum number of simultaneous requests and $M$ is the maximum number of aircraft allowed to land simultaneously. We model incoming aircraft as landing requests for vertiports drawn from a time-varying probability distribution. We model the performance metric as the \emph{maximum delay}. The cost function is
\begin{align}
C(\rho_A,\overline{p}) = \max_i(\text{delay}(V_i,\rho_\out))\cdot q_i\label{eq:cost_UAM}
\end{align}
where $\overline{p} = \lbrack q_1 , q_2,q_3,q_4 \rbrack$, $q_i$ is the probability of a request to land at vertiport hub $V_i$ for $i=\{1,2,3\}$, and $q_4$ is the probability of no landing request, and $\text{delay}(V_i,\rho_\out)$ is the processing delay at $V_i$ under strategy $\rho_\out$. We pre-compute strategies for three representative instantiations of the runtime information with each strategy taking 213 s to compute.
Initially, we choose the true distribution to be one of the instantiations. At $t=150$ minutes, the underlying probability distribution of landing requests changes such that the uninformed strategy performs poorly and the new probability value is not part of any of the representative instantiations. At $t=400$ minutes, the probability distribution switches back to the initial probability distribution. The uninformed strategy is a fixed, runtime information-independent strategy that satisfies $\spec$.



\begin{figure}
    \centering
    \input{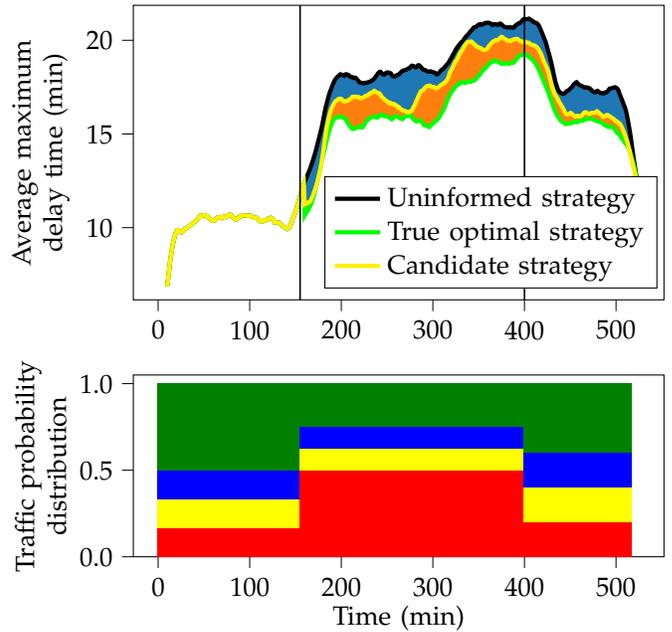}
    \caption{Top: Average maximum delay times for $N=100$ runs with landing requests drawn from a time varying probability distribution shown in bottom figure. Black vertical lines corresponds to a switch in strategy. Below: The probability of an aircraft requesting to land at hubs A ($red$), B ($yellow$), C ($blue$), or not land ($green$).}
    \label{fig:delay}
\end{figure}

Figure~\ref{fig:delay} compares the proposed switching strategy against an optimal strategy that relies on re-synthesis (requires heavy computation) and an uninformed strategy (does not incorporate runtime information). 
Initially, during low traffic times, we see no deviation in delay times as expected. The proposed switching strategy provides a significant reduction of delay over time compared to an uninformed strategy. However, it is suboptimal to the strategy obtained via re-synthesis, which relies on heavy computation that rules out real-time execution. 
Since \eqref{eq:cost_UAM} does not satisfy Assumption~\ref{assum:C_lin}, we do not have suboptimality bounds on the performance (Theorem~\ref{thm:bounds}). Empirically, the proposed approach shows an improvement in performance.

\section{Conclusion and future work}
We present a method to integrate information about environment behaviour gained at runtime into reactive synthesis. Our technique provides significant performance gains over standard reactive synthesis without sacrificing any correctness or facing state space explosion. In future work we intend to investigate the use of counterexamples to generate more candidate instantiations of runtime information parameters in order to guarantee $\epsilon$-optimality over the entire parameter set $\mathcal{P}$. 

\clearpage
\bibliographystyle{IEEEtran}
\bibliography{references}

\end{document}